\documentclass[10pt]{article} 
\usepackage[preprint]{tmlr}

\usepackage{amssymb}
\usepackage{amsmath,amssymb,amsfonts}
\usepackage{mathtools}
\usepackage{comment}
\usepackage{algorithmic}
\usepackage{graphicx}
\usepackage{makecell}
\usepackage{textcomp}
\usepackage{hyperref}
\hypersetup{hidelinks}
\hypersetup{
    colorlinks=true,
    linkcolor=black,       
    citecolor=blue,        
    urlcolor=blue          
}
\usepackage{gensymb}
\usepackage{booktabs}
\usepackage{mathrsfs}
\usepackage{color, colortbl} 
\usepackage{multirow}
\usepackage{siunitx}
\usepackage{pifont}
\usepackage{amsthm}
\newtheoremstyle{boldandplain}
  {\topsep}          
  {\topsep}          
  {\it}      
  {}                 
  {\bfseries}        
  {.}                
  { }                
  {}                 

\theoremstyle{boldandplain}

\newtheorem{proposition}{Proposition}
\newtheorem{remark}{Remark}
\usepackage{xcolor}
\definecolor{DarkGreen}{RGB}{1,54,24}   
\definecolor{DarkRed}{RGB}{110,3,14}     
\usepackage{amssymb} 

\newcommand{\cmark}{\makebox[1em][l]{\ding{51}}}
\newcommand{\xmark}{\makebox[1em][l]{\ding{55}}}

\title{Relational and Sequential Conformal Inference for Energy Time Series over Graphs via Foundation Models}

\author{\name Keivan Faghih Niresi\thanks{Part of this work was carried out during Keivan Faghih Niresi's research visit at the Data, Vibration, and Uncertainty (DVU) Group, Department of Engineering, University of Cambridge.} \email keivan.faghihniresi@epfl.ch \\
      \addr Intelligent Maintenance and Operations Systems (IMOS) Laboratory \\
      EPFL, Lausanne, Switzerland
      \AND
      \name Alice Cicirello \email ac685@cam.ac.uk \\
      \addr Data, Vibration, and Uncertainty (DVU) Group, Department of Engineering \\
      University of Cambridge, United Kingdom
      \AND
      \name Olga Fink \email olga.fink@epfl.ch \\
      \addr Intelligent Maintenance and Operations Systems (IMOS) Laboratory \\
      EPFL, Lausanne, Switzerland}

\def\month{MM}  
\def\year{YYYY} 
\def\openreview{\url{https://openreview.net/forum?id=XXXX}} 

\begin{document}

\maketitle

\begin{abstract}
Accurate energy demand forecasting is essential for the reliable operation and planning of modern sustainable energy systems. Spatial-temporal graph neural networks (STGNNs) have recently achieved strong performance in point forecasting by jointly modeling temporal dynamics and relational dependencies across interconnected energy nodes. However, in real-world energy systems, accurate point forecasts alone are insufficient, as operators also require reliable uncertainty estimates to support risk-aware decision-making, grid stability, and operational planning under uncertainty. Conformal prediction provides a principled and model-agnostic framework for uncertainty quantification with statistical coverage guarantees, making it particularly attractive for safety-critical energy applications. However, existing conformal prediction approaches often fail to fully capture the complex spatial-temporal structure of energy systems. To address these limitations, we propose STOIC (Spatial-Temporal Graph Conformal Prediction with In-Context Learning), a novel framework that integrates graph-based forecasting with the zero-shot calibration capabilities of tabular foundation models. STOIC first generates point forecasts using an STGNN and subsequently reformulates spatial-temporal residuals into a tabular representation suitable for in-context learning. Leveraging a tabular foundation model, STOIC  calibrates  prediction intervals without task-specific retraining, effectively capturing both sequential and relational dependencies. We evaluate STOIC on five diverse benchmarks, including synthetic simulations as well as real-world electricity and district heating networks. Across all datasets, STOIC consistently outperforms existing conformal prediction baselines, delivering more reliable and robust uncertainty estimates for complex graph-structured energy time series.

\end{abstract}

\section{Introduction}
Accurate energy demand forecasting is becoming increasingly critical with the rapid integration of renewable energy sources, the  electrification of heating, and the growing complexity of smart energy systems \citep{hernandez2014survey, raza2018multivariate}. Reliable forecasts of electricity and heating demand are essential for a wide range of operational and planning tasks, including generation scheduling, storage management, district heating operation, demand response, and long-term infrastructure planning \citep{egging2021seasonal, hayes2015state}. As modern energy systems become more decentralized and weather-dependent, forecasting errors can propagate through interconnected infrastructures and directly affect system stability, operational efficiency, and economic cost. Over the past decade, deep learning approaches have significantly advanced forecasting performance by learning complex nonlinear temporal patterns from large historical datasets \citep{mathumitha2024intelligent,devaraj2021holistic, theiler2026integrating}. Architectures based on recurrent neural networks \citep{rahman2018predicting}, temporal convolutions \citep{jalali2021novel}, and transformer models \citep{pentsos2025hybrid, wang2022transformer} have demonstrated strong predictive capabilities across a variety of electricity and heating applications, establishing deep learning as a dominant paradigm for energy forecasting.

Despite these advances, most existing forecasting models focus primarily on temporal dynamics and often treat each location or asset independently. In real-world energy systems, however, demand patterns are inherently interconnected \citep{tascikaraoglu2016short}. Electricity consumption and heating load are influenced by spatial proximity, shared weather conditions, socio-economic factors, and correlated human activity \citep{wu2022efficient}. As a result, models that ignore spatial dependencies may fail to capture important relational information and overlook how demand at one node influences neighboring regions \citep{zhuang2023multi}. This issue is especially pronounced in district heating networks and geographically distributed electricity systems, where demand dynamics exhibit strong spatial coupling \citep{lin2021spatial}. Consequently, even highly accurate temporal models may miss a critical source of predictive information.

Graph neural networks (GNNs) provide a natural framework for modeling these dependencies \citep{corso2024graph}.  By representing energy systems as graphs, GNNs  explicitly encode  relationships between interconnected nodes and learn how information propagates across the network \citep{wu2020comprehensive, FINK2026112213}. Combined with temporal modeling components, spatial-temporal graph neural networks (STGNNs) have achieved promising results in forecasting tasks where both local temporal dynamics and network interactions are important \citep{jin2024survey}. Such models are particularly well suited to energy applications \citep{theiler2025heterogeneous, khodayar2019convolutional}, as they can jointly exploit topological structure, neighborhood influence, and system-wide interactions within a unified architecture \citep{simeunovic2021spatio, khodayar2018spatio}. However, accurate point forecasts alone are insufficient for real-world energy system operation. While STGNNs substantially improve predictive accuracy, they remain fundamentally deterministic models that provide only single-valued forecasts without quantifying the uncertainty associated with future demand. In operational settings, forecasting uncertainty is often as important as forecasting accuracy itself. Energy operators must continuously make high-stakes decisions under incomplete and evolving information, where inaccurate confidence estimates can directly affect reserve allocation, storage scheduling, market participation, and overall system reliability.

Accurate uncertainty quantification is therefore essential for risk-aware energy management \citep{daneshvar2023risk}. Underestimating uncertainty can lead to inadequate reserve allocation, inefficient storage utilization, unstable scheduling decisions, and reduced system reliability. Conversely, overly conservative estimates may result in unnecessary operational costs and inefficient resource usage.  Recent works discuss uncertainty classification from both system and modeling perspectives \citep{kamariotis2025consistent, cicirello2024physics, koune2025adversarial}, distinguishing between aleatoric uncertainty (irreducible data variability) and epistemic uncertainty (reducible lack of knowledge); when the model is misspecified, aleatoric uncertainty cannot be properly quantified, resulting in strongly coupled parameter uncertainties and model discrepancies. To handle these coupled uncertainties, classical Bayesian inference frameworks \citep{box2011bayesian, lye2021sampling}, which are implemented via sampling \citep{koune2023bayesian} or variational approaches \citep{kamariotis2025consistent, margossian2025variational}, are utilized alongside Gaussian processes \citep{cicirello2022machine, damianou2013deep, gan2020data, xiong2025deep} and Bayesian neural networks \citep{liu2023ultra, wang2023probabilistic, jospin2022hands, gal2016dropout} to provide principled probabilistic interpretations and posterior distributions over predictions. While these models are highly attractive when epistemic uncertainty is important \citep{nemani2023uncertainty}, enabling operators to assess forecast confidence and make robust decisions in weather-dependent sustainable energy systems \citep{meng2025probabilistic}, they are often hindered by restrictive distributional assumptions that fail under regime-shifting energy demand data. Furthermore, because these probabilistic methods can be computationally demanding for large-scale spatial-temporal datasets \citep{lye2022efficient, KRANNICHFELDT2026116838}, there is a growing motivation to develop distribution-free uncertainty quantification methods that remain reliable under complex spatial-temporal dependencies.

Conformal Prediction (CP) is a well-established distribution-free alternative for uncertainty quantification \citep{shafer2008tutorial}. Rather than identifying or specifying the source of uncertainty, it yields prediction intervals with coverage guarantees on a certain quantity of interest based on observed data on the same quantity. Unlike Bayesian approaches, CP does not require strong  assumptions about the underlying data-generating process \citep{angelopoulos2023conformal}. Instead, it constructs prediction intervals with finite-sample coverage guarantees under minimal assumptions, making it particularly appealing for real-world energy applications where distributional assumptions are often difficult to verify. CP also acts as a flexible wrapper that can be combined with arbitrary forecasting models to produce calibrated uncertainty estimates. However, standard conformal methods typically assume exchangeability \citep{barber2023conformal}, an assumption that is  often violated in energy systems due to temporal dependencies and spatial correlations across interconnected nodes. As a consequence, naive conformal approaches may produce intervals that are either miscalibrated or overly conservative.

Several recent methods attempt to address these limitations. Adaptive Conformal Inference (ACI) \citep{gibbs2021adaptive} and Sequential Predictive Conformal Inference (SPCI) \citep{xu2023sequential} improve calibration under temporal non‑stationarity but still treat each time series independently, ignoring relational interactions between nodes. Spatial approaches such as Localized Spatial Conformal Prediction (LSCP) \citep{jiang2024spatial} incorporate spatial structure, yet  are designed for static spatial settings and do not explicitly model temporal dependencies. More recent graph-based approaches estimate uncertainty  through relational modeling using quantile random forests with the residual history of neighboring nodes as input features \citep{jiang2025spatio},  quantile GNNs \citep{cini2025relational}, or ellipsoidal sets prediction sets that capture pairwise node dependencies \citep{dua2025conformal}. Nevertheless, these approaches generally require retraining or calibration procedures whenever the deployment setting changes. This dependency limits scalability and reduces adaptability across heterogeneous energy systems, where new networks, operating regimes, or calibration windows may frequently arise.

Tabular Foundation Models (FMs) have recently demonstrated strong in-context learning capabilities, enabling them to generalize to new structured tasks with minimal additional data \citep{awais2025foundation, liang2024foundation, goswami2024moment}. We leverage this capability for scalable uncertainty calibration: rather than designing specialized calibration procedures for each forecasting task, these pretrained models can perform task-specific inference without explicit retraining \citep{dong2024survey, fink2026physics}, making them especially desirable for reducing calibration overhead in dynamic forecasting environments. While  dedicated FM for spatial-temporal graph forecasting remains limited, recent successes in tabular learning with models such as TabPFN \citep{hollmann2022tabpfn, grinsztajn2025tabpfn} demonstrate strong in-context learning capabilities on structured data. This creates an opportunity to reformulate the calibration stage, rather than the full forecasting problem, into a structured tabular representation. Motivated by this perspective, spatial-temporal residuals can be organized in tabular form and calibrated using pretrained tabular foundation models, enabling efficient conformal calibration without modifying the underlying forecasting architecture.

Building on this idea, we propose \textbf{S}patial-\textbf{T}emporal Graph C\textbf{o}nformal Prediction with \textbf{I}n-\textbf{C}ontext Learning (STOIC), a novel framework for uncertainty quantification in graph-based energy forecasting. STOIC first trains an STGNN to generate point forecasts over spatial-temporal energy networks. The residuals produced by the pretrained forecasting model are then transformed into a structured tabular representation that captures temporal dependencies through local residual history and rolling statistics, and relational dependencies through permutation-invariant aggregators (e.g., mean, median, min, max) over neighbors’ residuals, which is directly inspired by the message-passing framework. A pretrained tabular FM is subsequently employed to calibrate these residuals and construct prediction intervals without task-specific retraining. In this way, STOIC combines the representational power of spatial-temporal graph modeling with the data efficiency of in-context learning to produce reliable and transferable uncertainty estimates.

To evaluate the proposed framework, we conduct experiments on five datasets spanning both synthetic and real-world settings. These benchmarks include  controlled simulations, synthetic district heating networks,  real-world district heating systems, and  real-world electricity demand datasets. Across these diverse scenarios, STOIC consistently produces calibrated and efficient prediction intervals while remaining robust to complex spatial-temporal dependencies and non-stationary dynamics.

The main contributions of this paper are summarized as follows:

\begin{enumerate}
\item We propose STOIC, a novel CP framework that combines spatial-temporal graph forecasting with the zero-shot calibration capabilities of tabular FMs.
\item We develop a methodology for transforming spatial-temporal forecasting residuals into structured tabular representation, enabling in-context learning for  conditional quantile estimation.
\item We demonstrate that pretrained tabular FMs such as TabPFN can be leveraged to construct robust and calibrated prediction intervals that capture complex spatial-temporal dependencies without task-specific retraining.
\item We perform extensive experiments across five synthetic and real-world energy datasets, demonstrating that STOIC consistently achieves more reliable and efficient uncertainty estimates than existing conformal prediction baselines.
\end{enumerate}

The remainder of this paper is organized as follows. Section \ref{sec:preliminaries} introduces the necessary preliminaries and background concepts. Section \ref{sec:methodology} presents the proposed STOIC framework. Sections \ref{sec:setup} and \ref{sec:results} report the experimental setup and results, respectively. Finally, Section \ref{sec:conclusion} concludes the paper and discusses future research directions.

\section{Preliminaries}
\label{sec:preliminaries}
This section introduces the foundational concepts required to understand the STOIC framework. We first establish the mathematical notation for spatial-temporal energy demand forecasting and define the objectives for both point forecasting and uncertainty quantification. We then provide an overview of CP, a distribution-free framework that serves as the theoretical foundation of the proposed calibration approach. Finally, we discuss the key challenges associated with applying uncertainty quantification methods to spatial-temporal graph-structured energy demand data.

\subsection{Problem Formulation and Uncertainty Quantification}

We consider a spatial-temporal energy demand forecasting problem over a network of $N$ interconnected  nodes. Let $x_{i,t}$ denote the scalar observation at node $i$ and time $t$. In real-world settings, these observations are subject to measurement noise and other sources of stochasticity, which introduce additional uncertainty into the forecasting task.  The observations across all nodes at time $t$ are represented by the vector $\mathbf{x}_t = [x_{1,t}, x_{2,t}, \dots, x_{N,t}]^\top.$ Given a historical observation window of length $W$, the forecasting model receives as input the sequence $\mathbf{X}_{t-W:t-1} = [\mathbf{x}_{t-W}, \dots, \mathbf{x}_{t-1}] \in \mathbb{R}^{N \times W},$ which captures both the temporal evolution of each node and the interactions across the network over the past $W$ time steps.
The objective is to predict the future demand vector at time $t$:
\begin{equation}
\hat{\mathbf{x}}_{t} = f_\theta(\mathbf{X}_{t-W:t-1}),
\end{equation}
where $f_\theta$ denotes a forecasting model parameterized by $\theta$. The prediction corresponding to node $i$ is given by the $i$-th component of the forecast vector:
\begin{equation}
\hat{x}_{i,t} = [\hat{\mathbf{x}}_{t}]_i.
\end{equation}

While accurate point forecasting is important, operational decision-making in energy systems also requires reliable estimates of predictive uncertainty. To this end, the goal is to construct prediction intervals (PIs) of the form:
\begin{equation}
C_{i,t}^\alpha = [\ell_{i,t}^\alpha, u_{i,t}^\alpha],
\end{equation}
where $\ell_{i,t}^{\alpha}$ and $u_{i,t}^{\alpha}$ denote the lower and upper bounds of the interval for node $i$ at time $t$, respectively, and $\alpha \in (0,1)$ is a user-specified target miscoverage level; a smaller $\alpha$ enforces higher confidence but typically results in wider prediction intervals. Ideally, the prediction intervals should satisfy the coverage property
\begin{equation}
\mathbb{P}(x_{i,t} \in C_{i,t}^\alpha) \ge 1 - \alpha,
\end{equation}
commonly referred to as \emph{marginal coverage} \citep{lei2014distribution}. A stronger requirement is \emph{conditional coverage} \citep{lei2014distribution}, which conditions on the observed historical context:
\begin{equation}
\mathbb{P}(x_{i,t} \in C_{i,t}^\alpha \mid \mathbf{X}_{t-W:t-1}) \ge 1 - \alpha.
\end{equation}

Achieving conditional coverage is substantially more challenging in spatial-temporal forecasting settings due to temporal dependencies, spatial correlations, and distributional shifts across nodes and time horizons. 

In this work, we focus on post-hoc uncertainty quantification, where calibrated prediction intervals are constructed on top of an already trained forecasting model. This setting is especially attractive in practice because it avoids modifying or retraining the underlying forecasting architecture, thereby preserving the predictive capabilities of the pretrained forecaster while augmenting it with reliable uncertainty estimates.

\subsection{Conformal Prediction}

Conformal prediction (CP) is a distribution-free framework for uncertainty quantification that constructs calibrated PIs using empirical quantiles of conformity scores. A key advantage of CP is that it can be applied as a post-hoc wrapper around arbitrary forecasting models while providing finite-sample coverage guarantees under relatively mild assumptions \citep{shafer2008tutorial}. In this work, we adopt the split conformal prediction (SCP) setting \citep{papadopoulos2002inductive}, where the available data is partitioned into disjoint training and calibration sets. The forecasting model is first trained on the training set, while the calibration set is subsequently used to estimate the distribution of prediction errors and construct calibrated uncertainty intervals.

Let $\mathcal{T}_{\mathrm{cal}}$ denote a set of $n$ calibration time indices, disjoint from the training set. For each calibration time step $t \in \mathcal{T}_{\mathrm{cal}}$, the forecasting model produces the prediction:
\begin{equation}
\hat{\mathbf{x}}_{t} = f_\theta(\mathbf{X}_{t-W:t-1}),
\end{equation}
where $f_\theta$ denotes the pretrained forecasting model. The corresponding residual vector is defined as:
\begin{equation}
\mathbf{r}_{t} = \mathbf{x}_{t} - \hat{\mathbf{x}}_{t}.
\end{equation}
The residual associated with node $i$ is therefore given by:
\begin{equation}
r_{i,t} = [\mathbf{r}_{t}]_i.
\end{equation}
Following standard SCP, the conformity score is defined as the absolute prediction residual:
\begin{equation}
s_{i,t} = \left|r_{i,t}\right|.
\end{equation}

Given a target miscoverage level $\alpha \in (0,1)$, the conformal quantile for node $i$ is estimated from the calibration conformity scores:
\begin{equation}
\hat{q}_{i}^{1-\alpha} = \mathrm{Quantile}_{\lceil (n+1)(1-\alpha) \rceil}(\{s_{i,t} \}_{t \in \mathcal{T}_{\mathrm{cal}}}).
\end{equation}
The resulting PI for a new test sample at time $t^\prime$ is then constructed as:
\begin{equation}
C_{i,t^\prime}^\alpha = \left[\hat{x}_{i,t^\prime} - \hat{q}_{i}^{1-\alpha}, \; \hat{x}_{i,t^\prime} + \hat{q}_{i}^{1-\alpha}\right].
\end{equation}

Under the assumption that the calibration and test residuals are exchangeable \citep{barber2023conformal}, SCP guarantees the marginal coverage property introduced previously. However, this assumption is often violated in spatial-temporal forecasting problems due to temporal dependencies, distributional shifts, and correlations across interconnected nodes. 
Consequently, standard conformal approaches produce intervals that are either miscalibrated or overly conservative. The central objective of this work is therefore to move beyond independent residual calibration and exploit the structured dependencies present in spatial-temporal energy demand data. Rather than treating calibration residuals as isolated samples, the proposed framework leverages temporal and relational information to construct more informative and adaptive prediction intervals.

\subsection{Graph Deep Learning for Point Forecasting}
\label{subsec:STGNN}
STGNNs provide a natural framework for forecasting energy demand over interconnected systems. Their fundamental principle is to jointly model temporal dynamics at individual nodes and relational dependencies across the underlying network. By combining temporal representation learning with graph-based message passing, STGNNs can capture both local consumption dynamics and cross-node interactions within a unified architecture. In STOIC, the STGNN serves as the point forecasting backbone prior to uncertainty quantification.

 We adopt a time-then-space architecture \citep{cini2025relational, gao2022equivalence}, in which temporal dependencies are first encoded independently  by a sequence encoder at each node before spatial interactions are incorporated through graph propagation layers. For each node $i$, the temporal encoder processes this historical sequence $\mathbf{x}_{i,t-W:t-1} \in \mathbb{R}^{W},$
and maps it to a latent representation:
\begin{equation}
\mathbf{h}^{(0)}_{i,t} = \mathrm{TemporalEncoder}\!\left(\mathbf{x}_{i,t-W:t-1}\right),
\end{equation}
The function $\mathrm{TemporalEncoder}(\cdot)$ can be implemented using a variety of sequence modeling architectures, including gated recurrent units (GRU), temporal convolutional networks, or transformer encoders. Aggregating the latent embeddings from all $N$ nodes yields the spatial-temporal representation:
\begin{equation}
\mathbf{H}^{(0)}_t = [\mathbf{h}^{(0)}_{1,t}, \mathbf{h}^{(0)}_{2,t}, \dots, \mathbf{h}^{(0)}_{N,t}]^\top.
\end{equation}

Following temporal encoding, the node representations are propagated through a sequence of graph neural network layers. For $\gamma= 0, \dots, \Gamma-1$, the propagation rule is defined as:
\begin{equation}
\mathbf{H}^{(\gamma+1)}_t = \mathrm{GNN}^{(\gamma)}\!\left(\mathbf{H}^{(\gamma)}_t, \mathbf{A}\right),
\end{equation}
where $\mathbf{A} \in \mathbb{R}^{N \times N}$ is the graph adjacency matrix and $\mathrm{GNN}^{(\gamma)}(\cdot)$ denotes the $\gamma$-th message-passing layer, such as diffusion graph convolution \citep{atwood2016diffusion, li2018diffusion}. This message-passing mechanism enables each node to aggregate information from its spatial neighborhood, enabling the learned representations to capture both  node-specific temporal behavior and interactions among connected nodes. The final latent representation $\mathbf{H}^{(\Gamma)}_t$ therefore provides a compact encoding of the spatial-temporal system state and forms the basis for downstream point forecasting. In the following section, these graph-based forecasts are combined with conformal calibration and in-context learning to construct calibrated prediction intervals for spatial-temporal energy demand forecasting.

\section{Proposed Method}
\label{sec:methodology}
The STOIC framework decouples point forecasting from uncertainty quantification to address the structured  residual behavior commonly observed in networked energy systems. Rather than relying on static calibration schemes, STOIC reformulates uncertainty estimation as an in-context learning problem by leveraging the zero-shot generalization capabilities of tabular FMs. The framework first employs an STGNN architecture to capture spatial-temporal dependencies and generate point forecasts. The resulting prediction residuals are then transformed into a structured tabular representation that captures both local neighborhood interactions and temporal dynamics. Finally, this contextual representation is used to construct adaptive PIs without requiring task-specific retraining. The following subsections detail the graph-based point forecasting architecture, the residual feature engineering procedure, and the in-context conformal calibration process. An overview of the full pipeline is illustrated in Fig. \ref{overal}

\begin{figure*} \centering \includegraphics[width=\linewidth]{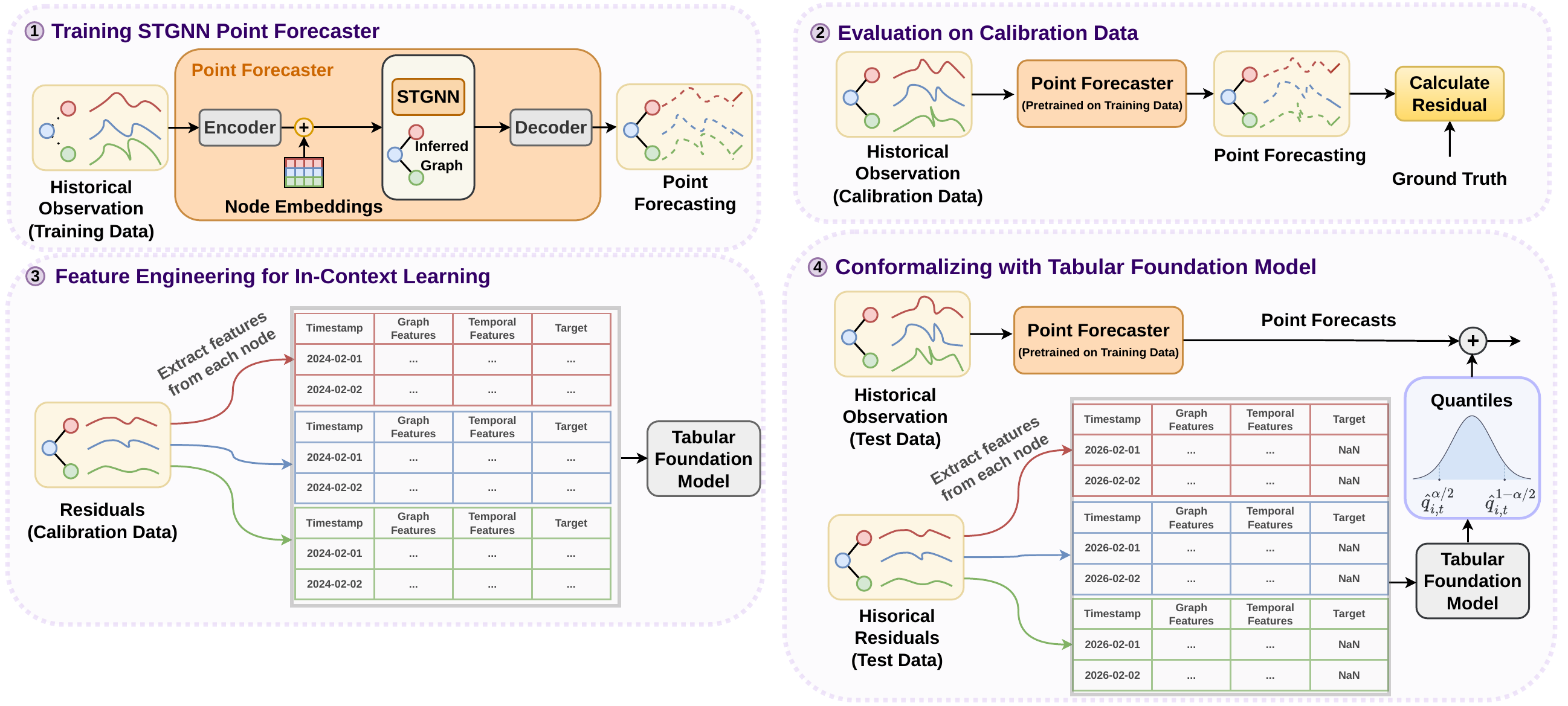} \caption{Overview of the proposed STOIC framework. The pipeline consists of four stages: (1) training an STGNN-based point forecasting backbone  on historical spatial-temporal  observations, (2) computing calibration  residuals using the pretrained forecaster, (3) transforming residuals into structured tabular representations  that capture  temporal dynamics  and spatial neighborhood interactions, and (4) performing in-context calibration with a tabular FM to estimate conditional residual  quantiles and construct  PIs.} \label{overal} 
\end{figure*}

\subsection{Spatial-Temporal Architecture for Point Forecasting}

The forecasting  backbone of STOIC follows a shared spatial-temporal architecture for forecasting. For each node $i$, the historical input sequence is first projected into a latent feature space through a shared encoder:
\begin{equation}
\mathbf{h}^{(0)}_{i,t} = \mathrm{Encoder}(\mathbf{x}_{i,t-W:t-1}).
\end{equation}
The resulting node-level embeddings are  stacked to form the global latent representation:
\begin{equation}
\mathbf{H}^{(0)}_t = [\mathbf{h}^{(0)}_{1,t}, \dots, \mathbf{h}^{(0)}_{N,t}]^\top \in \mathbb{R}^{N \times d}.
\end{equation}
To incorporate node-specific characteristics, a learnable node embedding matrix  \(\mathbf{V} \in \mathbb{R}^{N \times d}\) is  added to the latent representation $\mathbf{H}^{(0)}_t \leftarrow \mathbf{H}^{(0)}_t + \mathbf{V}$.

The resulting latent representations are subsequently processed by the STGNN backbone introduced in Section \ref{subsec:STGNN}, which jointly models temporal dynamics and inter-node interactions:
\begin{equation}
\mathbf{Z}_t = \mathrm{STGNN}(\mathbf{H}^{(0)}_t, \mathbf{A}).
\end{equation}
Finally, a shared decoder maps the latent node representations  to the next-step forecasts:
\begin{equation}
\hat{\mathbf{x}}_{t} = \mathrm{Decoder}(\mathbf{Z}_t),
\end{equation}
where $\hat{\mathbf{x}}_{t} \in \mathbb{R}^{N}$ denotes  the predicted observation vector at time $t$. These point forecasts are subsequently  used in the conformal calibration stage to construct uncertainty-aware prediction intervals.

\subsection{Learning Sparse Graph from Data}
In many real-world energy forecasting scenarios, the underlying interaction graph is not available a priori. Rather than assuming access to a predefined physical topology, STOIC learns the spatial dependencies directly from data. This design makes the framework applicable to systems where the true connectivity is unavailable, incomplete, or only partially reliable. Accordingly, the graph is treated as a latent object whose structure is learned jointly with the forecasting model. Given a collection of $N$ nodes, we parameterize the pairwise connectivity through a learnable score matrix $\mathbf{\Phi} \in \mathbb{R}^{N \times N}$, where each entry \(\phi_{ij}\) represents the affinity between nodes \(i\) and \(j\).

To construct a sparse neighborhood for each node, we sample $K$ neighbors without replacement for each node $i$. The probability of node $j$ being selected is given by:

\begin{equation}
\pi_{ij} = \frac{\exp(\phi_{ij})}{\sum_{k=1}^{N} \exp(\phi_{ik})}, \quad j \in \{1, \dots, N\}
\end{equation}
The resulting sampled neighborhood defines the local connectivity set $\mathcal{N}(i)$ for each node $i$. The graph is learned end-to-end using the Gumbel-Top-$K$ reparameterization trick \citep{kool2020ancestral} together with a straight-through gradient estimator \citep{bengio2013estimating}. This allows gradients to propagate through the discrete neighborhood selection process during training. As a result, STOIC can automatically discover sparse, task-specific spatial structures that are optimized for forecasting performance, even when the true system topology is unknown.

\subsection{Spatial-Temporal Feature Engineering for In-Context Learning}

After training the point forecasting backbone, STOIC is applied to the calibration split to obtain node-level prediction residuals. For every calibration time step $t \in \mathcal{T}_{\mathrm{cal}}$, the residual vector is computed as:
\begin{equation}
\mathbf{r}_t = \mathbf{x}_t - \hat{\mathbf{x}}_t.
\end{equation}
where \(\mathbf{x}_t\) denotes the observations and \(\hat{\mathbf{x}}_t\) represents the corresponding point forecasts produced by the STGNN backbone.

The objective of the subsequent feature engineering stage is to transform these residuals into structured tabular representations that can serve as contextual inputs for the tabular FM. Since tabular foundation models are designed to operate on fixed tabular inputs rather than graph-structured data, the spatial dependencies encoded in the underlying network cannot be directly processed in their native form. STOIC therefore converts the temporal and spatial residual structure into engineered tabular features that preserve informative neighborhood interactions while remaining compatible with the tabular FM architecture. Rather than treating calibration residuals as independent samples drawn from a stationary distribution, STOIC explicitly models their temporal and spatial structure. This design is motivated by empirical observations that forecasting errors in energy systems often exhibit strong temporal autocorrelation, heteroscedasticity, and cross-node dependence \citep{huang2023uncertainty, jia2020residual}. Consequently, the uncertainty associated with a given node may depend not only on its own recent error history but also on the residual behavior of neighboring nodes.

To capture these dependencies, STOIC constructs feature vectors using only historical residual information available prior to the prediction time step, thereby preventing information leakage during calibration. For each target node \(i\), the resulting feature representation encodes multiple sources of contextual information derived from the calibration residual history. Specifically, the features are organized into three categories:
\begin{itemize}
    \item Temporal features, which capture the recent residual dynamics of the target node;
    
    \item Graph-based features, which summarize residual patterns from neighboring nodes in the learned graph structure; and
    
    \item Combined features, which integrate temporal and spatial context into a unified representation for in-context calibration.
\end{itemize}

These engineered representations provide the tabular FM with structured contextual information that enables adaptive estimation of the conditional residual distribution and, ultimately, the construction of context-aware prediction intervals.

\subsubsection{Temporal Features}

To capture the local temporal dynamics of forecasting errors at node $i$, STOIC constructs features from the historical residual sequence $\{r_{i,t-k} : k \ge 1\}$. These features are designed to summarize recent error behavior, including persistence, magnitude, and local variability, which are informative for estimating the conditional uncertainty at the current time step. Given a lag set $\mathcal{P} = \{1, \dots, P\}$ and a collection of rolling window sizes $\mathcal{W}$, compute summary statistics over past residuals. For each window \(w \in \mathcal{W}\), the rolling mean and rolling median are defined as:
\begin{equation}
\mu^{(w)}_{i,t} = \frac{1}{w} \sum_{k=1}^{w} r_{i,t-k}, \quad 
\nu^{(w)}_{i,t} = \mathrm{median}\left(\{r_{i,t-k}\}_{k=1}^{w}\right).
\end{equation}
The temporal feature vector is then constructed as:
\begin{equation}
\mathbf{f}^{\mathrm{temp}}_{i,t} = \big[ r_{i,t-p}, \; \mu^{(w)}_{i,t}, \; \nu^{(w)}_{i,t} \big]_{p \in \mathcal{P}, \; w \in \mathcal{W}}.
\end{equation}
These features provide the tabular FM with information about short-term residual trends and local uncertainty regimes. In particular, they enable the model to distinguish between stable forecasting periods and periods characterized by rapidly changing or highly variable residual behavior. 

\subsubsection{Graph-based Features}

In addition to temporal patterns, STOIC explicitly incorporates information from neighboring nodes. Since tabular foundation models cannot directly process graph-structured inputs or perform message passing over adjacency relations, the spatial structure of the network must first be transformed into fixed-dimensional tabular representations. To achieve this, STOIC summarizes neighborhood residual behavior  through permutation-invariant aggregation operators, effectively translating graph-level information into a format compatible with tabular FMs.

This design mirrors the message-passing mechanism used in graph neural networks, where node representations are updated through aggregation of information from local neighborhoods. Following the same principle, STOIC constructs graph-based features that summarize neighboring residual dynamics while remaining invariant to the ordering of neighboring nodes.

Let $\mathcal{N}(i)$ denote the set of neighbors of node $i$ in the learned or predefined adjacency structure $\mathbf{A}$. For each lag $p \in \mathcal{P}$ and  aggregation operator $g(\cdot)$, we compute:
\begin{equation}
a^{g,p}_{i,t}
=
g\left(\left\{ r_{j,t-p} : j \in \mathcal{N}(i) \right\}\right),
\end{equation}
where $g(\cdot)$ is a permutation-invariant operator such as the mean, median, minimum, or maximum.

The resulting graph-based feature vector is defined as:
\begin{equation}
\mathbf{f}^{\mathrm{graph}}_{i,t}
=
\big[
a^{\mathrm{mean},p}_{i,t},
a^{\mathrm{median},p}_{i,t},
a^{\mathrm{min},p}_{i,t},
a^{\mathrm{max},p}_{i,t}
\big]_{p=1}^{P}.
\end{equation}

These features encode the local spatial context of  node $i$ by summarizing how neighboring nodes behaved in the recent past. This is particularly important in energy systems, where nearby or functionally related locations often exhibit correlated demand patterns, shared environmental influences, or common system-level disturbances. Consequently, residual behavior observed at neighboring nodes can provide informative signals for estimating uncertainty at the target nodes.

\subsubsection{Combined Feature Representation}

The final contextual representation for node $i$ at calibration time $t$ is obtained by concatenating the temporal and graph-based feature components:
\begin{equation}
\mathbf{f}_{i,t}
=
\left[
\mathbf{f}^{\mathrm{temp}}_{i,t},
\mathbf{f}^{\mathrm{graph}}_{i,t}
\right].
\end{equation}

This concatenated vector defines a structured tabular sample whose target variable is the corresponding calibration residual  \(r_{i,t}\). In STOIC, these feature-target pairs constitute the contextual dataset provided to the tabular FM used for in-context calibration. Feature extraction is performed independently for each node, resulting in a collection of node-specific tabular datasets that encode both temporal residual dynamics and spatial neighborhood information. This representation preserves local temporal history, incorporates spatial context through permutation-invariant neighbor aggregation, and avoids leakage by relying only on past residuals. Together, these features provide the foundation for the next stage, where the tabular FM (e.g., TabPFN) is used to infer calibrated uncertainty estimates.

\subsection{Conformalizing with Tabular Foundation Model}

Given the engineered feature representations, STOIC performs uncertainty estimation by formulating residual modeling as a tabular regression problem. Rather than relying on fixed empirical residual quantiles as in standard CP, STOIC leverages a tabular FM to learn the conditional residual distribution in an in-context manner. For each node $i$, we construct a dataset consisting of feature-residual pairs $\{(\mathbf{f}_{i,t}, r_{i,t})\}_{t \in \mathcal{T}_{\mathrm{cal}}}$, where \(\mathbf{f}_{i,t}\) denotes the engineered contextual feature vector and \(r_{i,t}\) is the corresponding calibration residual. A pretrained tabular FM is then applied in-context to map features, constructed from the history of past residuals and their spatial context, to residual values:

\begin{equation}
r_{i,t} = h_\psi(\mathbf{f}_{i,t}),
\end{equation}
where $h_\psi(\cdot)$ denotes the tabular FM parameterized by $\psi$.

At inference time, given a new input window $\mathbf{X}_{t-W:t-1}$  from the test set, the STGNN backbone first produces the point forecast $\hat{x}_{i,t}$. The corresponding contextual feature vector $\mathbf{f}_{i,t}$ is then constructed using the same feature engineering procedure applied during calibration. Conditioned on this representation, the tabular FM predicts quantiles of the residual distribution:
\begin{equation}
\hat{q}_{i,t}^{\alpha} = h_\psi^{(\alpha)}(\mathbf{f}_{i,t}),
\end{equation}
where $h_\psi^{(\alpha)}(\cdot)$ denotes the predicted $\alpha$-quantile of the conditional residual distribution. The PI is subsequently obtained by shifting the predicted residual quantiles around the point forecast:
\begin{equation}
C_{i,t}^\alpha =
\left[
\hat{x}_{i,t} + \hat{q}_{i,t}^{\alpha/2}, \;
\hat{x}_{i,t} + \hat{q}_{i,t}^{1-\alpha/2}
\right].
\end{equation}

This formulation can be interpreted as a learned, feature‑conditional generalization of CP. While classical conformal methods use global residual quantiles, STOIC adapts the interval width based on the local temporal and spatial context encoded in $\mathbf{f}_{i,t}$. As a result, the method produces heteroscedastic and context‑aware PIs. Unlike classical conformal approaches that rely on exchangeability, STOIC explicitly conditions on structured residual context, enabling it to capture regime‑dependent uncertainty patterns that are inaccessible to global calibration. Importantly, this procedure does not require retraining the forecasting backbone. The tabular FM operates purely on residual features, enabling a modular pipeline where forecasting and uncertainty quantification are decoupled. This also allows leveraging the zero‑shot generalization capabilities of modern tabular models to perform calibration with limited data.

\begin{remark}[Distinction with quantile regression]
We emphasize that STOIC is fundamentally different from conventional quantile regression approaches. Classical quantile regression directly models conditional quantiles of the target variable and therefore typically requires either multiple independently trained models or dedicated output heads for different quantile levels.  As a result, uncertainty estimation becomes tightly coupled to the forecasting architecture and often requires extensive task-specific retraining, hyperparameter tuning, and calibration. In contrast, STOIC performs uncertainty estimation post hoc on the residuals of an arbitrary pretrained point forecaster. The framework transforms spatial-temporal residual structure into a tabular contextual representation and uses a pretrained tabular FM to infer conditional residual quantiles in-context. This design yields several important advantages. First, STOIC is fully modular and can be applied to any forecasting architecture without altering or retraining the underlying predictor. Second, uncertainty estimation is conditioned on structured residual context rather than learned jointly with the forecasting objective, enabling adaptive interval construction even when the forecasting model itself was not optimized for probabilistic prediction. Third, by leveraging pretrained tabular FMs, STOIC inherits strong zero‑shot generalization and the ability to operate effectively with limited calibration data, which are difficult to obtain with standard quantile regression pipelines.

Conceptually, STOIC remains aligned with the conformal inference perspective, where uncertainty is calibrated through residual behavior rather than direct target quantile prediction. However, STOIC extends this principle beyond global calibration by introducing feature-conditional residual modeling tailored to graph-structured spatial-temporal systems.
\end{remark}

\begin{remark}[Expressive capacity and motivation for the feature representation]
\label{rem:capacity}
Let $\mathbf{f}_{i,t}$ denote the engineered feature vector constructed from the recent residual history of node $i$ together with aggregated permutation‑invariant statistics of neighboring residuals.
Any continuous function defined on a compact domain that depends on these quantities can, in principle, be approximated arbitrarily well by a universal approximator operating on $\mathbf{f}_{i,t}$.
Consequently, an idealized model conditioned on the engineered features can emulate a restricted class of one‑hop message‑passing functions whose neighborhood interactions are fully characterized by the selected aggregation operators.

Furthermore, if the true conditional distribution of the residual $r_{i,t}$ depends on the neighbors' past residuals solely through these permutation-invariant statistics, then $\mathbf{f}_{i,t}$ captures all the conditioning information necessary to determine the true conditional quantiles. Even when this assumption is only partially satisfied, the selected aggregates provide a compact, informative summary sufficient for the tabular model to estimate the conditional quantiles. In practice, the fixed pretrained model $h_\psi(\cdot)$ produces a non-linear empirical approximation, the fidelity of which is governed by the model's inherent capacity and the relevance of its pretraining distribution.
\end{remark}

\begin{proposition}[Asymptotic conditional coverage under an oracle estimator]
\label{prop:oracle_cal}
Let $r_{i,t}=x_{i,t}-\hat{x}_{i,t}$ and let
$F_{r|f}(\cdot\mid\mathbf{f}_{i,t})$ be its true conditional cumulative
distribution function. For a target level $\beta\in(0,1)$, denote by
$q^{\beta}(\mathbf{f}_{i,t})$ the true conditional quantile satisfying
$\mathbb{P}\bigl(r_{i,t}\le q^{\beta}(\mathbf{f}_{i,t})\mid\mathbf{f}_{i,t}\bigr)=\beta$.
Assume that:
\begin{enumerate}
    \item $F_{r|f}(\cdot\mid\mathbf{f}_{i,t})$ is continuous at
    $q^{\alpha/2}(\mathbf{f}_{i,t})$ and $q^{1-\alpha/2}(\mathbf{f}_{i,t})$;
    \item an \emph{oracle} estimator provides consistent estimates of both
    conditional quantiles:
    \[
    \hat{q}^{\alpha/2}(\mathbf{f}_{i,t})\xrightarrow{p} q^{\alpha/2}(\mathbf{f}_{i,t}),
    \quad
    \hat{q}^{1-\alpha/2}(\mathbf{f}_{i,t})\xrightarrow{p} q^{1-\alpha/2}(\mathbf{f}_{i,t}),
    \]
    as the number of calibration samples $n\to\infty$.
\end{enumerate}
Then the prediction interval
\[
C_{i,t}^{\alpha}= \Bigl[\,\hat{x}_{i,t}+\hat{q}^{\alpha/2}(\mathbf{f}_{i,t}),\;
                     \hat{x}_{i,t}+\hat{q}^{1-\alpha/2}(\mathbf{f}_{i,t})\,\Bigr]
\]
achieves asymptotic conditional coverage:
\[
\mathbb{P}\bigl(x_{i,t}\in C_{i,t}^{\alpha}\mid\mathbf{f}_{i,t}\bigr)
\;\xrightarrow{p}\; 1-\alpha .
\]
\end{proposition}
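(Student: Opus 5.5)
The plan is to reduce the coverage event to a statement about the residual and then apply continuity of the conditional distribution function. Since $r_{i,t}=x_{i,t}-\hat{x}_{i,t}$, the event $x_{i,t}\in C_{i,t}^{\alpha}$ is exactly
\[
\hat{q}^{\alpha/2}(\mathbf{f}_{i,t})\le r_{i,t}\le \hat{q}^{1-\alpha/2}(\mathbf{f}_{i,t}).
\]
I will treat the estimated quantiles as functions of the calibration data, which are held fixed (conditioned on) when evaluating coverage at the test point. Conditionally on $\mathbf{f}_{i,t}$ and on the calibration sample, the coverage probability is then
\[
F_{r|f}\bigl(\hat{q}^{1-\alpha/2}(\mathbf{f}_{i,t})\mid\mathbf{f}_{i,t}\bigr)-F_{r|f}\bigl(\hat{q}^{\alpha/2}(\mathbf{f}_{i,t})^{-}\mid\mathbf{f}_{i,t}\bigr).
\]
This uses that the test residual is not among the calibration points, so the estimated quantiles are independent of the test residual given $\mathbf{f}_{i,t}$. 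I will state this as an implicit assumption of the split setting.

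Next, I will use assumption 1. Continuity of $F_{r|f}$ at the two true quantiles, together with the continuous mapping theorem, transfers the convergence in probability of assumption 2 to the distribution-function values. This gives
\[
F_{r|f}\bigl(\hat{q}^{1-\alpha/2}\mid\mathbf{f}_{i,t}\bigr)\xrightarrow{p}F_{r|f}\bigl(q^{1-\alpha/2}\mid\mathbf{f}_{i,t}\bigr)=1-\alpha/2.
\]
The left-limit term converges likewise to $\alpha/2$, since at a continuity point the left limit agrees with the value, and the left limits converge as well. Subtracting, the coverage tends in probability to $(1-\alpha/2)-\alpha/2=1-\alpha$.

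The main obstacle is making the continuous mapping step rigorous, because $F_{r|f}$ is only assumed continuous at the two target points and not everywhere. I will handle this with an $\varepsilon$–$\delta$ argument. Fix $\varepsilon>0$ and choose $\delta$ so that $|F(y)-F(q)|<\varepsilon$ and $|F(y^-)-F(q)|<\varepsilon$ whenever $|y-q|<\delta$; this is possible by continuity at $q$ and monotonicity of $F$. Then
\[
\mathbb{P}\bigl(|F(\hat q)-F(q)|\ge\varepsilon\bigr)\le\mathbb{P}\bigl(|\hat q-q|\ge\delta\bigr)\to0 .
\]
A second, smaller point is the well-definedness of the conditional probability when the interval endpoints are random. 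I will address it by conditioning on the calibration σ-field and noting that the conclusion is convergence in probability of a random variable.
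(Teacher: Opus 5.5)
Your proposal is correct and follows essentially the same route as the paper. You reduce coverage of $x_{i,t}$ to the residual event, write the conditional coverage as a difference of $F_{r|f}$ values at the estimated quantiles, and pass to the limit through continuity at the true quantiles via the continuous mapping theorem; your $\varepsilon$--$\delta$ argument is just that theorem for a constant limit. Your version is slightly more careful than the paper's in two respects: you use the left limit $F_{r|f}\bigl(\hat{q}^{\alpha/2}(\mathbf{f}_{i,t})^{-}\mid\mathbf{f}_{i,t}\bigr)$ for the closed lower endpoint, and you make explicit that the test residual is conditionally independent of the calibration sample given $\mathbf{f}_{i,t}$ --- under temporal dependence this is a genuine extra assumption, not a consequence of disjointness --- whereas the paper uses it only implicitly.
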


\begin{proof}
Because \(x_{i,t}=\hat{x}_{i,t}+r_{i,t}\), the event
\(x_{i,t}\in C_{i,t}^{\alpha}\) is equivalent to
\[
\hat{q}^{\alpha/2}(\mathbf{f}_{i,t})
\le r_{i,t}
\le
\hat{q}^{1-\alpha/2}(\mathbf{f}_{i,t}).
\]
It therefore suffices to establish coverage for the residual interval.

\begin{equation}
\mathbb{P}\bigl(r_{i,t}\le q^{\alpha/2}(\mathbf{f}_{i,t})\mid\mathbf{f}_{i,t}\bigr)
= \alpha/2,
\end{equation}
\begin{equation}
\mathbb{P}\bigl(r_{i,t}\le q^{1-\alpha/2}(\mathbf{f}_{i,t})\mid\mathbf{f}_{i,t}\bigr)
= 1-\alpha/2 .
\end{equation}

Under the oracle assumption, the estimated quantiles are consistent:
\begin{align}
\hat{q}^{\alpha/2}(\mathbf{f}_{i,t}) &\xrightarrow{p} q^{\alpha/2}(\mathbf{f}_{i,t}),\\
\hat{q}^{1-\alpha/2}(\mathbf{f}_{i,t}) &\xrightarrow{p} q^{1-\alpha/2}(\mathbf{f}_{i,t}).
\end{align}

Since $F_{r|f}(\cdot\mid\mathbf{f}_{i,t})$ is continuous at the true
quantiles, the Continuous Mapping Theorem yields
\begin{equation}
F_{r|f}\bigl(\hat{q}^{\alpha/2}(\mathbf{f}_{i,t})\mid\mathbf{f}_{i,t}\bigr)
\xrightarrow{p} F_{r|f}\bigl(q^{\alpha/2}(\mathbf{f}_{i,t})\mid\mathbf{f}_{i,t}\bigr)
= \alpha/2,
\end{equation}
\begin{equation}
F_{r|f}\bigl(\hat{q}^{1-\alpha/2}(\mathbf{f}_{i,t})\mid\mathbf{f}_{i,t}\bigr)
\xrightarrow{p} F_{r|f}\bigl(q^{1-\alpha/2}(\mathbf{f}_{i,t})\mid\mathbf{f}_{i,t}\bigr)
= 1-\alpha/2 .
\end{equation}

Therefore,
\[
\begin{aligned}
&\mathbb{P}\bigl( \hat{q}^{\alpha/2}(\mathbf{f}_{i,t})
\le r_{i,t} \le \hat{q}^{1-\alpha/2}(\mathbf{f}_{i,t}) \mid \mathbf{f}_{i,t} \bigr) = F_{r|f}\bigl(\hat{q}^{1-\alpha/2}(\mathbf{f}_{i,t})\mid\mathbf{f}_{i,t}\bigr)
   - F_{r|f}\bigl(\hat{q}^{\alpha/2}(\mathbf{f}_{i,t})\mid\mathbf{f}_{i,t}\bigr) \\
&\xrightarrow{p} (1-\alpha/2) - \alpha/2 = 1-\alpha .
\end{aligned}
\]

Shifting the interval by the point forecast $\hat{x}_{i,t}$ preserves the
coverage probability, so the same limit holds for $x_{i,t}$.
\end{proof}

\begin{remark}[On the oracle assumption and the choice of TabPFN] 
\label{rem:oracle_gap}
Proposition~\ref{prop:oracle_cal} establishes an asymptotic oracle result under the assumption that the conditional residual quantiles are estimated consistently. Existing approaches~\citep{jiang2025spatio, jiang2024spatial, xu2023sequential} satisfy this requirement using quantile random forests trained from scratch on the calibration set, which are provably consistent under certain assumptions~\citep{xu2023sequential, meinshausen2006quantile}. In contrast, TabPFN is a prior-fitted network~\citep{hollmann2022tabpfn} designed to approximate Bayesian inference. Because it relies on a synthetic prior, it lacks theoretical guarantees for consistently estimating arbitrary conditional distributions~\citep{nagler2023statistical}. Despite this theoretical gap, TabPFN introduces practical advantages that make it highly effective for the STOIC framework. First, its in-context learning capabilities enable zero-shot calibration across diverse nodes without the need for task-specific retraining. Second, its strong pre-trained inductive biases successfully capture complex residual patterns that would otherwise require prohibitively large calibration datasets for methods like random forests. Finally, TabPFN’s in‑context learning capability makes it straightforward to share one pretrained model across all nodes, simplifying deployment compared to methods that require per‑node training

Overall, while TabPFN does not strictly satisfy the oracle assumption, it acts as a powerful approximation. Its coverage deviation is bounded by the total variation distance (as detailed in Remark~\ref{rem:tv_bound}), and its empirical performance consistently outperforms conventional alternatives.
\end{remark}

\begin{remark}[Coverage bound without the oracle assumption]
\label{rem:tv_bound}

Proposition~\ref{prop:oracle_cal} assumes a consistent estimator of the conditional quantiles. In practice, the pretrained tabular foundation model only provides  an approximation to the true conditional residual distribution. The following argument gives a qualitative interpretation of how approximation error affects coverage. For a fixed calibration set, let $\hat{q}^{\alpha/2}(\mathbf{f}_{i,t}),\hat{q}^{1-\alpha/2}(\mathbf{f}_{i,t})$ denote the  quantiles predicted by the tabular FM.  Define a distribution $P_{\psi}^c$ whose $\alpha/2$ and $1-\alpha/2$ quantiles equal exactly these values (e.g., a continuous uniform distribution between them with $\alpha/2$ point masses at the boundaries).  By construction,
\[
\mathbb{P}_{P_{\psi}^c}\!\bigl( r_{i,t} \in [\hat{q}^{\alpha/2},\,
\hat{q}^{1-\alpha/2}] \bigr) = 1-\alpha .
\]

Let $P_{t}^c$ be the true conditional residual distribution, and let $B = \bigl\{ r_{i,t} \in [\hat{q}^{\alpha/2},\,\hat{q}^{1-\alpha/2}] \bigr\}$. By the definition of total variation distance,
\[
\bigl|\mathbb{P}_{P_{t}^c}(B) - \mathbb{P}_{P_{\psi}^c}(B)\bigr|
\le \mathrm{TV}(P_{t}^c,\,P_{\psi}^c) .
\]
Consequently,
\begin{equation}
\mathbb{P}_{P_{t}^c}(B)
\;\ge\; 1-\alpha - \mathrm{TV}\bigl(P_{t}^c,\,P_{\psi}^c\bigr),
\end{equation}
where $\mathrm{TV}(\cdot,\cdot)$ denotes the total variation distance \citep{cini2025relational}.  This is a lower limit: actual coverage can be higher than $1-\alpha$ if the interval is conservatively wide, even when $\mathrm{TV}>0$.  The bound is not computable in practice, but it gives a qualitative interpretation: the better $P_{\psi}^c$ approximates the true conditional residual distribution, the smaller the potential coverage gap.
\end{remark}

\section{Experimental Setup}
\label{sec:setup}

\subsection{Datasets}
We evaluate STOIC on a diverse collection of synthetic and real-world datasets, including controlled simulations and large-scale energy systems. Across all datasets, we define the forecasting task as predicting one-day-ahead demand from the previous seven days of observations. To facilitate model training and evaluation, each dataset is partitioned chronologically into training, calibration, and testing sets, following a 70\%, 15\%, and 15\% split, respectively. The training split is used exclusively for learning the point forecasting backbone, while the calibration split is reserved for residual-based uncertainty estimation. Furthermore, for training point forecaster, all data are standardized by subtracting the mean and dividing by the standard deviation, with statistics computed solely from the training set to prevent data leakage.

\noindent \textbf{Simulated Control Signals (SCS):} To evaluate STOIC under controlled conditions with known ground-truth dynamics, we generate a synthetic dataset driven by two latent control signals, $u_1(t)$ and $u_2(t)$, modeled as sinusoids:
\begin{equation}
    u_1(t) = \sin(2\pi \cdot 1.2t + 0.2),
\end{equation}
\begin{equation}
     u_2(t) = 0.8 \sin(2\pi \cdot 0.45t - 0.5).
\end{equation}
We define 10 measurement nodes ($y_1, \dots, y_{10}$) where each node is a non-linear combination of these latent drivers and a low-frequency temporal drift. Specifically, each measurement $y_i$ is generated as:
\begin{equation}
    y_i(t) = f_i(u_1(t), u_2(t)) + s_i(t) + \epsilon_i,
\end{equation}
where $f_i(\cdot)$ represents node-specific interactions, $s_i(t)$ represents a slow-moving seasonal component, and $\epsilon_i \sim \mathcal{N}(0, \sigma^2)$ is additive white Gaussian noise. The noise variance $\sigma^2$ is dynamically adjusted for each node to maintain a strict Signal-to-Noise Ratio (SNR) of 45 dB. In total, we generate 3650 daily time steps for each measurement node.

\noindent \textbf{Synthetic District Heating (SDH):} We simulate a district heating network comprising 100 buildings (spatial nodes) with heterogeneous thermal characteristics using the \texttt{demandlib} library\footnote{\url{https://demandlib.readthedocs.io/en/latest/}}. The simulated network comprises 60\% single-family houses, 30\% multi-family houses, and 10\% commercial buildings to reflect realistic variability in building usage and consumption behavior. To introduce heterogeneity across the network, each building is assigned a unique seasonal demand profile with varying peak intensities and transition dynamics. In addition, we incorporate realistic consumption dynamics by including auto‑correlated fluctuations (modeled as a multiplicative first‑order autoregressive (AR(1)) noise with per‑building autocorrelation coefficients uniformly sampled from [0.75, 0.85] and relative magnitudes between 2.5\% and 5\%) as well as weekend‑specific behavioral variations. The simulation spans the period from 2010 to 2019 at daily resolution, resulting in 3652 observations per building. This dataset provides a controlled yet realistic environment for evaluating uncertainty estimation under heterogeneous spatial-temporal demand patterns.
 
\noindent \textbf{District Heating (EWZ):} This dataset consists of daily real-world district heating consumption measurements provided by Elektrizitätswerk der Stadt Zürich (EWZ), a Swiss energy utility company. The dataset contains heat demand observations from 48 buildings (spatial nodes) located in Switzerland from April 1, 2016, to October 31, 2025, resulting in 3501 daily observations per node. Measurements are in MWh and reflect the complexity of an operational district heating network, including heterogeneous consumption behavior, irregular occupancy patterns, seasonal demand fluctuations, and measurement noise. As a result, the dataset provides a realistic and challenging benchmark for evaluating uncertainty quantification methods in physical energy distribution systems.

\noindent \textbf{Electricity Consumption Load (ECL):} 
The ECL dataset\footnote{\url{https://archive.ics.uci.edu/dataset/321/electricityloaddiagrams20112014}} comprises electricity consumption measurements collected from clients in Portugal at 15-minute resolution. Although the original monitoring network is larger, several clients were excluded because their meters were not installed at the start of the observation period or contained significant gaps. After filtering for continuous measurements, 40 clients were retained and treated as spatial nodes. To ensure consistency with the other datasets and reduce high-frequency consumption noise, the original power measurements (kW) were converted to energy consumption (kWh) and aggregated to daily totals. We consider the period from January 1, 2011 to December 31, 2014, resulting in 1461 daily observations per client. The dataset captures heterogeneous electricity demand patterns across consumers and provides a realistic benchmark for evaluating spatial-temporal forecasting and uncertainty quantification methods under diverse consumption behaviors.

\noindent \textbf{Smart Meter Aggregates (CKW):} This dataset, provided by Centralschweizerische Kraftwerke AG (CKW)\footnote{\url{https://axsa4prod4publicdata4sa.blob.core.windows.net/$web/index.html\#dataset-a}} consists of smart meter electricity consumption measurements aggregated at the postal code level to preserve consumer privacy. Out of the 116 municipalities originally available, we retain 93 that do not have significant missing values. We consider these 93 municipality-level spatial nodes over the period from December 31, 2020, to February 5, 2026.  Large industrial consumers are excluded in order  to focus on residential consumption dynamics. Original measurements are recorded at 15-minute resolution and aggregated to daily energy consumption values (kWh) to align with the temporal granularity of the other datasets. This results in 1863 daily observations for each spatial node. The dataset captures heterogeneous residential demand patterns across municipalities, providing a realistic large-scale benchmark for spatial-temporal uncertainty quantification.

\subsection{Baselines}

We compare STOIC against a diverse set of CP baselines spanning global, temporal, and spatially extended approaches. Split Conformal Prediction (SCP) \citep{papadopoulos2002inductive} provides valid marginal coverage but produces uniform prediction intervals by disregarding spatial and temporal heterogeneity. To account for temporal non-stationarity, Adaptive Conformal Inference (ACI) \citep{gibbs2021adaptive} and Sequential Predictive Conformal Inference (SPCI) \citep{xu2023sequential} dynamically adapt interval widths based on recent coverage errors and historical residuals, respectively.  However, both methods model each time series independently and therefore fail to capture dependencies across nodes. To incorporate spatial structure, Localized Spatial Conformal Prediction (LSCP) \citep{jiang2024spatial} and Spatial-Temporal Conformal Prediction (STCP) \citep{jiang2025spatio} introduce spatially-aware calibration mechanisms. In contrast, STOIC employs a tabular FM to learn feature-conditional residual distributions jointly across space and time, producing fully adaptive and context-aware PIs without task-specific retraining. To ensure a fair comparison, all conformal methods utilize the same point forecasting backbone described in Section \ref{sec:methodology}, and all evaluations are conducted at a target confidence level of $1-\alpha = 0.90$.

\subsection{Performance Metrics}

We evaluate the quality of the predicted intervals using standard metrics that assess both calibration and sharpness. Given a PI $C_{i,t}^\alpha = [\ell_{i,t}^\alpha, u_{i,t}^\alpha]$ at confidence level $1-\alpha$ and the corresponding ground-truth value $x_{i,t}$, we consider the following metrics.

\noindent \textbf{Prediction Interval Width (PI-Width):} PI-Width measures the sharpness of the interval and is defined as:
\begin{equation}
\text{(PI-Width)}_{i,t} = u_{i,t}^\alpha - \ell_{i,t}^\alpha.
\end{equation}
Lower values indicate tighter intervals.

\noindent \textbf{Coverage:} Coverage measures the proportion of observations contained within the predicted intervals:
\begin{equation}
\text{Coverage} = \mathbb{E}\left[ \mathbb{I}\left(x_{i,t} \in C_{i,t}^\alpha \right) \right],
\end{equation}
where $\mathbb{I}(\cdot)$ is the indicator function. Ideally, the empirical coverage should closely match the target confidence level.

\noindent \textbf{Winkler Score:} The Winkler score \citep{winkler1972decision} jointly evaluates PI width and  calibration by penalizing PIs that fail to cover the true value:
\begin{equation}
\text{Winkler Score}=
\begin{cases}
u_{i,t}^\alpha - \ell_{i,t}^\alpha, & \text{if } x_{i,t} \in C_{i,t}^\alpha, \\
(u_{i,t}^\alpha - \ell_{i,t}^\alpha) + \frac{2}{\alpha}(\ell_{i,t}^\alpha - x_{i,t}), & \text{if } x_{i,t} < \ell_{i,t}^\alpha, \\
(u_{i,t}^\alpha - \ell_{i,t}^\alpha) + \frac{2}{\alpha}(x_{i,t} - u_{i,t}^\alpha), & \text{if } x_{i,t} > u_{i,t}^\alpha.
\end{cases}
\end{equation}
The Winkler score rewards narrow prediction intervals while imposing increasingly large penalties for prediction failures. All metrics are averaged across all nodes and time steps to provide a global assessment of uncertainty quantification performance.

\section{Results and Discussion}
\label{sec:results}

\subsection{Quantitative Results}
This section presents the empirical evaluation of STOIC against several CP baselines across all datasets. All experiments were conducted at a target confidence level of $90\%$.

The results on the SCS dataset, detailed in Table \ref{tab:results_scs}, highlight substantial differences in calibration performance across methods. Notably, only STCP and STOIC achieve empirical coverage at or above the nominal 90\% target, whereas all remaining baselines suffer from systematic under-coverage, with coverage values around 88\%. This finding is particularly important, as under-coverage indicates a violation of the intended statistical reliability guarantees. Furthermore, methods such as SCP and ACI attempt to improve coverage by producing wider prediction intervals, yet still fail to reach the target coverage level, resulting in an unfavorable trade-off between reliability and sharpness. In contrast, STOIC achieves a PI-Width of 0.0309, which is approximately 20\% narrower than the strongest temporal baseline, SPCI, while simultaneously maintaining valid coverage. This advantage is further reflected in the Winkler Score, where STOIC achieves the lowest value among all compared methods. These results demonstrate that STOIC is uniquely able to combine accurate calibration with highly informative prediction intervals, whereas competing approaches either sacrifice coverage guarantees or require substantially wider intervals to achieve comparable performance.

\begin{table}[htbp]
    \centering
    \caption{Performance on Simulated Control Signals (SCS)}
    \label{tab:results_scs}
    \begin{tabular}{lccc}
        \toprule
        Method & Coverage (\%) & PI-Width ↓ & Winkler Score ↓ \\
        \midrule
        SCP & {88.13} \xmark & 0.0490 & 0.0826 \\
        SPCI & {88.39} \xmark & 0.0387 & 0.0588 \\
        ACI & {88.83} \xmark & 0.0517 & 0.0791 \\
        LSCP & {88.01} \xmark & 0.0444 & 0.0691 \\
        STCP & {90.90} \cmark & 0.0388 & 0.0549 \\
        \textbf{STOIC} & {91.57} \cmark & \textbf{0.0309} & \textbf{0.0435} \\
        \bottomrule
    \end{tabular}
\end{table}

The results on the SDH dataset, shown in Table \ref{tab:results_sdh}, show a different behavior compared to SCS. All methods achieve empirical coverage at or above the nominal 90\% level, indicating that calibration is less challenging on this dataset. As a result, the comparison is driven primarily by interval efficiency rather than coverage. Under these conditions, STOIC produces the narrowest prediction intervals among the compared methods while maintaining the required coverage guarantees. This demonstrates that STOIC can translate equivalent calibration performance into substantially more precise uncertainty estimates, thereby providing a more favorable balance between reliability and informativeness.

\begin{table}[htbp]
    \centering
    \caption{Performance on Synthetic District Heating (SDH)}
    \label{tab:results_sdh}
    \begin{tabular}{lccc}
        \toprule
        Method & Coverage (\%) & PI-Width ↓ & Winkler Score ↓ \\
        \midrule
        SCP & {90.78} \cmark & 0.0082 & 0.0126 \\
        SPCI & {90.04} \cmark & 0.0077 & 0.0111 \\
        ACI & {90.63} \cmark & 0.0082 & 0.0126 \\
        LSCP & {90.12} \cmark & 0.0077 & 0.0114 \\
        STCP & {90.59} \cmark & 0.0075 & 0.0104 \\
        \textbf{STOIC} & {91.05} \cmark & \textbf{0.0067} & \textbf{0.0093} \\
        \bottomrule
    \end{tabular}
\end{table}

For the real-world EWZ dataset, the results in Table \ref{tab:results_ewz} further demonstrate the effectiveness of STOIC. The method achieves an empirical coverage of 90.15\%,  closely matching the nominal target and resulting in the smallest coverage gap among the compared approaches. At the same time, STOIC produces substantially narrower prediction intervals than LSCP, despite the latter explicitly leveraging spatial neighborhoods (PI-Width: 0.4594 vs. 0.3947). The resulting Winkler Score of 0.6478 is the lowest across all compared methods, demonstrating that STOIC consistently achieves a more favorable balance between calibration accuracy and interval sharpness on a real-world utility network dataset.

\begin{table}[htbp]
    \centering
    \caption{Performance on District Heating (EWZ)}
    \label{tab:results_ewz}
    \begin{tabular}{lccc}
        \toprule
        Method & Coverage (\%) & PI-Width ↓ & Winkler Score ↓ \\
        \midrule
        SCP & {92.56} \cmark & 0.4391 & 0.6995 \\
        SPCI & {90.88} \cmark & 0.4154 & 0.6941 \\
        ACI & {92.12} \cmark & 0.4266 & 0.6973 \\
        LSCP & {90.90} \cmark & 0.4594 & 0.7219 \\
        STCP & {91.28} \cmark & 0.4335 & 0.7028 \\
        \textbf{STOIC} & {90.15} \cmark & \textbf{0.3947} & \textbf{0.6478} \\
        \bottomrule
    \end{tabular}
\end{table}

The ECL evaluation, summarized in Table~\ref{tab:results_elc}, shows that STOIC and STCP are the only methods that achieve coverage above the nominal 90\% level (90.52\% and 90.53\%, respectively), whereas SPCI and LSCP exhibit under-coverage.
Although both STOIC and STCP provide valid coverage, STOIC achieves a substantially lower Winkler Score (1674.85 vs.\ 1813.12 for STCP), indicating more efficient uncertainty quantification.
While LSCP produces the narrowest prediction intervals, it fails to meet the target coverage level, demonstrating that increased sharpness comes at the expense of calibration.
Overall, STOIC is the only method that simultaneously achieves near-nominal coverage and the best overall interval quality, as reflected by its superior Winkler Score.

\begin{table}[htbp]
    \centering
    \caption{Performance on Electricity Consumption Load (ECL)}
    \label{tab:results_elc}
    \begin{tabular}{lccc}
        \toprule
        Method & Coverage (\%) & PI-Width ↓ & Winkler Score ↓ \\
        \midrule
        SCP   & {93.65} \cmark & 1288.53 & 1874.88 \\
        SPCI  & {89.44} \xmark   & 1199.82 & 1929.06 \\
        ACI   & {93.43} \cmark & 1278.77 & 1867.85 \\
        LSCP  & {89.27} \xmark   & \textbf{1137.70} & 1783.94 \\
        STCP  & {90.53} \cmark & 1193.72 & 1813.12 \\
        \textbf{STOIC} & {90.52} \cmark & 1168.54 & \textbf{1674.85} \\
        \bottomrule
    \end{tabular}
\end{table}

Finally, the results on the CKW dataset in Table \ref{tab:results_ckw} highlight a clear trade-off between coverage and interval efficiency. Only SCP, ACI, and STOIC achieve coverage at or above the nominal 90\% level, whereas SPCI, LSCP, and STCP exhibit under-coverage. Although STCP produces the narrowest prediction intervals, this gain is achieved at the expense of violating the desired coverage guarantee. In contrast, STOIC maintains valid coverage while producing intervals that remain competitive in  width.  This advantage is reflected in the lowest Winkler Score among all compared methods, indicating that STOIC is able to preserve calibration without resorting to overly conservative uncertainty estimates. These results further demonstrate that STOIC delivers the highest-quality prediction intervals by combining reliable coverage with efficient interval widths.
\begin{table}[htbp]
    \centering
    \caption{Performance on Smart Meter Aggregates (CKW)}
    \label{tab:results_ckw}
    \begin{tabular}{lccc}
        \toprule
        Method & Coverage (\%) & PI-Width ↓ & Winkler Score ↓ \\
        \midrule
        SCP & {91.80} \cmark & 3001.00 & 3916.29 \\
        SPCI & {88.97} \xmark & 2131.12 & 3067.55 \\
        ACI & {91.25} \cmark & 2937.59 & 3911.71 \\
        LSCP & {89.06} \xmark & 2156.72 & 3132.60 \\
        STCP & {89.77} \xmark & \textbf{2123.03} & 2936.83 \\
        \textbf{STOIC} & {90.89} \cmark & 2156.48 & \textbf{2927.36} \\
        \bottomrule
    \end{tabular}
\end{table}

\begin{figure*}[htbp]
    \centering
    \includegraphics[width=\linewidth]{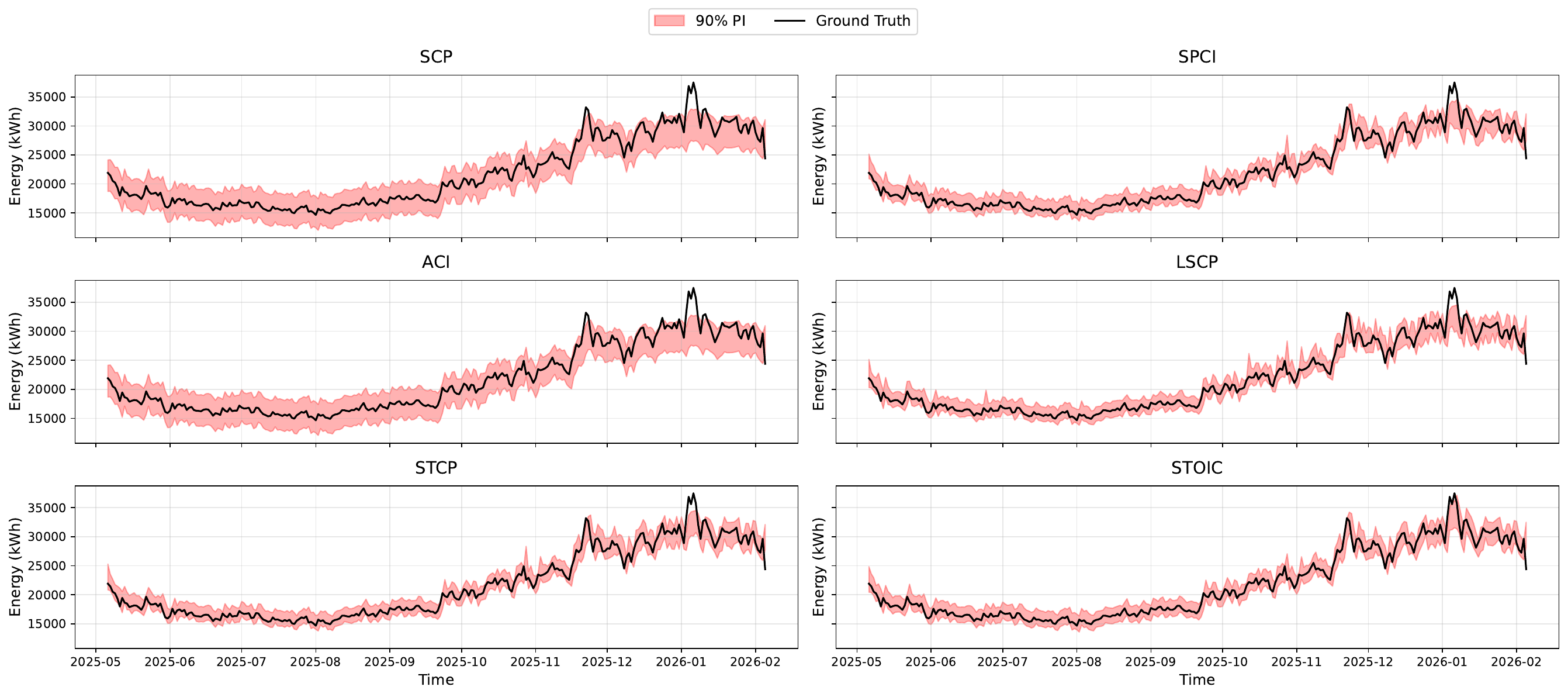}
    \caption{Visual comparison of 90\% PIs across conformal methods for the CKW Smart Meter dataset. The shaded area represents the 90\% PI, and the black line indicates the ground truth energy consumption (kWh).}
    \label{fig:ckw_vis}
\end{figure*}

\begin{figure*}[htbp]
    \centering
    \includegraphics[width=\linewidth]{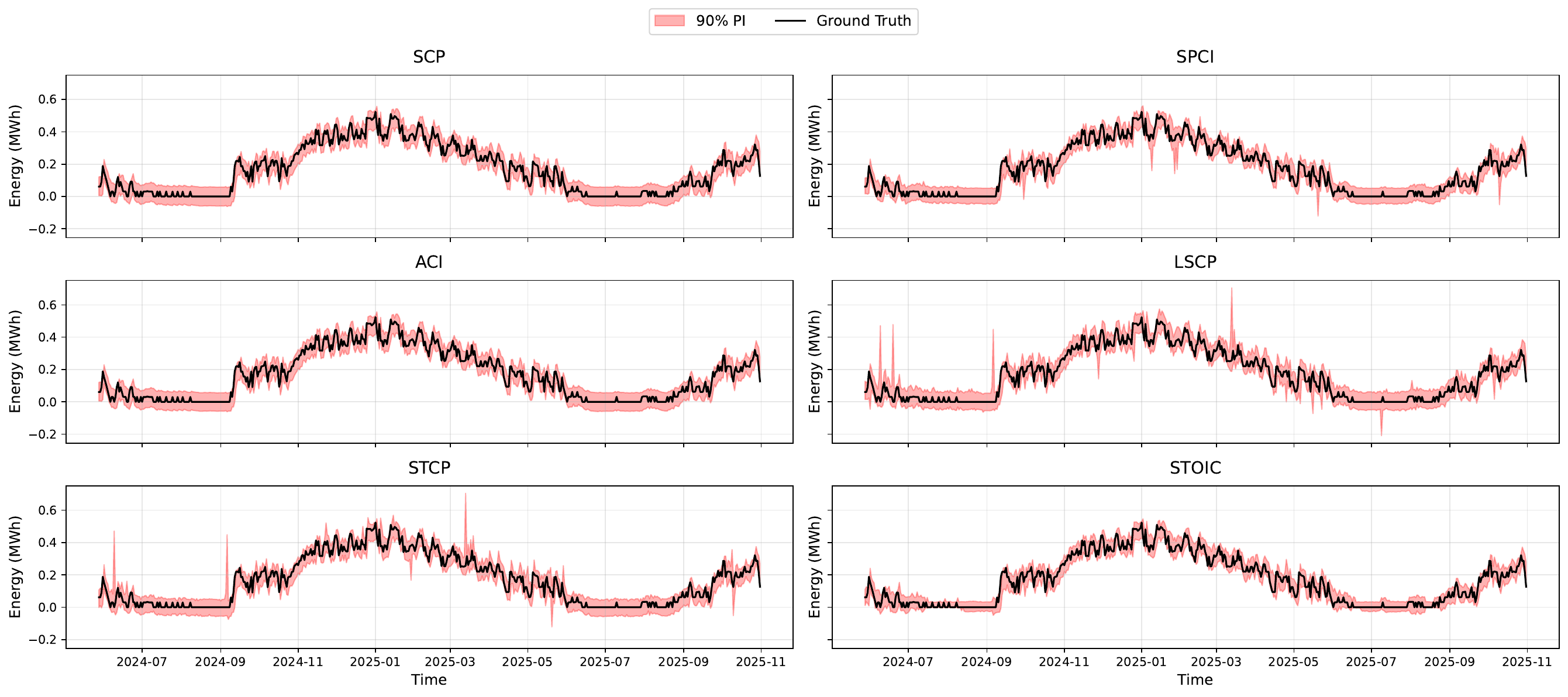}
    \caption{Visual comparison of 90\% PIs for the EWZ District Heating dataset. The plots illustrate the adaptive nature of the intervals (MWh) relative to seasonal consumption shifts and volatile demand spikes.}
    \label{fig:ewz_vis}
\end{figure*}

\subsection{Qualitative Results}

The qualitative analysis is conducted on the CKW and EWZ datasets because they represent two distinct forecasting settings. While CKW exhibits substantial temporal variability in electricity demand, EWZ is characterized by strong seasonality and noisier demand patterns. This allows the behavior of the prediction intervals to be examined under different sources of uncertainty.

Figure \ref{fig:ckw_vis} provides a qualitative comparison of the 90\% PIs for the CKW dataset, spanning from May 2025 to February 2026. While all methods successfully capture the broad upward trend in energy consumption during the winter months, there are distinct differences in interval efficiency. Standard methods like SCP and ACI exhibit relatively static or overly broad intervals that do not tightly track the daily volatility of the aggregated load. In contrast, STOIC produces substantially more adaptive intervals, remaining narrow during stable periods while expanding only when the underlying signal exhibits increased variability or rapid fluctuations. This behavior indicates that uncertainty is allocated where it is needed rather than uniformly across time. The resulting intervals more closely follow the local uncertainty structure of the data, providing sharper and more informative uncertainty estimates. This visual evidence supports the quantitative findings that STOIC achieves superior interval efficiency while maintaining reliable coverage in aggregated smart meter networks. The visualization for the EWZ dataset in Figure \ref{fig:ewz_vis} highlights how different methods respond to the extreme seasonality and noise typical of heat demand. A key observation in the LSCP and STCP plots is the presence of sharp, vertical spikes in the PIs during the early and mid-2025 periods. These spikes indicate that these methods are highly sensitive to localized spatial outliers, sometimes leading to excessively conservative (wide) intervals that may be less useful for practical demand forecasting and operational planning. Such abrupt expansions suggest that uncertainty estimates are being driven by isolated observations rather than by the overall uncertainty structure of the system. STOIC, however, maintains a much smoother and more consistent adaptive envelope. By conditioning on a broader set of features through its Tabular FM, STOIC avoids overreacting to individual noisy data points while still ensuring the 90\% PI encompasses the ground truth throughout the transition from high-demand winter months to low-demand summer periods. Importantly, STOIC achieves this stability without sacrificing responsiveness to genuine changes in demand patterns, resulting in uncertainty estimates that are both more robust and more informative for operational decision-making.

\subsection{Comparison with a Time-Series Foundation Model}

Recent advances in time-series foundation models have established large-scale pretrained forecasting models as a strong benchmark for both point forecasting and uncertainty quantification. Comparing STOIC against such models is therefore important for assessing the value of its spatial-temporal forecasting and conformal calibration pipeline relative to a general-purpose foundation model trained on a broad collection of time-series data. To address this question, we compare STOIC against  TimesFM \citep{das2024decoder}. , one of the most widely adopted foundation models for time-series forecasting. While many foundation models focus exclusively on point forecasting and do not provide calibrated uncertainty estimates, TimesFM is particularly suitable for this comparison because its latest release (version 2.5) supports probabilistic forecasting. TimesFM is a 200M-parameter decoder-only transformer pretrained on a large corpus of real-world and synthetic time series. Comparing against TimesFM therefore provides a particularly strong baseline and allows us to assess whether the spatial-temporal modeling and conformal calibration components of STOIC continue to provide benefits beyond large-scale pretraining.

The pretrained model returns point forecasts along with deciles, specifically the quantiles at 10\%, 20\%, \dots, 90\% levels. To ensure a fair comparison, both methods are therefore evaluated at a target coverage of 80\%, corresponding to the 0.1 and 0.9 quantiles. We perform the comparison on the two real-world datasets, EWZ and CKW, which represent different demand forecasting settings and are not part of the publicly documented TimesFM pretraining corpus. This setting enables a fair zero-shot evaluation and avoids potential concerns regarding data leakage. For STOIC, only the conformal calibration stage is adjusted to the 80\% target level; the forecasting backbone and learned representations remain unchanged. The resulting comparison therefore isolates the relative quality of the uncertainty estimates produced by a large-scale foundation model and a dedicated spatial-temporal conformal prediction framework. The results are summarized in Table~\ref{tab:timesfm}.

\begin{table}[htbp]
\centering
\caption{Comparison with TimesFM on the EWZ and CKW datasets at 80\% target coverage.}
\label{tab:timesfm}
\begin{tabular}{llccc}
\toprule
Dataset & Method & Coverage (\%) & PI-Width $\downarrow$ & Winkler Score $\downarrow$ \\
\midrule
\multirow{2}{*}{EWZ} 
& TimesFM & {73.47} \xmark & \textbf{0.2748} & 0.7798 \\
& \textbf{STOIC} & {80.40} \cmark & 0.2954 & \textbf{0.4830} \\
\midrule
\multirow{2}{*}{CKW} 
& TimesFM & {71.68} \xmark & \textbf{1535.38} & 3569.99 \\
& \textbf{STOIC} & {82.59} \cmark & 1676.80 & \textbf{2382.27} \\
\bottomrule
\end{tabular}
\end{table}

On both datasets, TimesFM produces slightly narrower intervals but fails to satisfy the desired coverage requirement by a substantial margin. Despite a target coverage of 80\%, the achieved coverage drops to only 73.47\% coverage on EWZ and 71.68\% on CKW, corresponding to shortfalls of more than 6 and 8 percentage points, respectively. This results in substantially higher Winkler scores (0.7798 vs.\ 0.4830 for EWZ; 3569.99 vs.\ 2382.27 for CKW). The apparent gain in interval sharpness therefore comes at the expense of unreliable uncertainty estimates. In contrast, STOIC achieves coverage slightly above the target on both datasets (80.4\% and 82.59\%) while maintaining competitive interval widths. These results demonstrate that large-scale pretraining alone is insufficient for reliable uncertainty quantification and that principled conformal calibration remains essential for obtaining statistically valid prediction intervals.

The performance gap is likely further amplified by the differing modeling assumptions of the two approaches. First, TimesFM is a univariate model: it processes each building or municipality independently and cannot exploit the relational structure inherent in energy networks. Second, its pretraining distribution may be poorly aligned with the specific characteristics of building-level heat demand and aggregated smart-meter data, which exhibit strong seasonality, irregular occupancy patterns, and regional correlations. STOIC mitigates both limitations by decoupling point forecasting from uncertainty calibration and injecting graph-aware residual features. The results therefore suggest that, for complex energy systems, incorporating domain structure and explicit calibration remains beneficial even in the presence of powerful pretrained forecasting models.

\subsection{Ablation Study}
\subsubsection{Effects of Features and Tabular Foundation Model}
To better understand which components drive the performance of STOIC, we conduct an ablation study on the EWZ and CKW datasets. The study is designed to evaluate both the contribution of the proposed feature representations and the impact of the tabular foundation model used for calibration. Specifically, we remove graph-based features and temporal features independently to assess their respective contributions to uncertainty quantification performance. Additionally, we examine the role of the calibration model by replacing the tabular FM (TabPFN) with a quantile random forest (QRF) trained on exactly the same set of features. This comparison isolates the benefit of the foundation-model-based calibration strategy from the benefit provided by the feature set itself. All variants are evaluated on coverage, prediction-interval width, and Winkler score. By systematically removing individual components while keeping the remainder of the pipeline unchanged, the ablation study provides insight into the relative importance of graph information, temporal context, and foundation-model-based calibration. The results, compared with the full STOIC model, are reported in Table~\ref{tab:ablation_results}.

\begin{table}[htbp]
\centering
\caption{Ablation study results on the EWZ and CKW datasets}
\label{tab:ablation_results}
\begin{tabular}{llccc}
\toprule
Dataset & Configuration & Coverage (\%) & PI-Width $\downarrow$ & Winkler Score $\downarrow$ \\
\midrule
\multirow{4}{*}{EWZ} 
& Full STOIC Model      & {90.15} \cmark & \textbf{0.3947} & \textbf{0.6478} \\
& w/o Graph Features    & {89.96} \xmark   & 0.3953          & 0.6563 \\
& w/o Temporal Features & {90.21} \cmark & 0.3980          & 0.6727 \\
& w/o TabPFN      & {90.21} \cmark   & 0.4503          & 0.7186 \\
\midrule
\multirow{4}{*}{CKW} 
& Full STOIC Model      & {90.89} \cmark & 2156.48 & \textbf{2927.36} \\
& w/o Graph Features    & {90.72} \cmark & 2194.87          & 3002.30 \\
& w/o Temporal Features & {90.73} \cmark & 2186.19          & 2961.14 \\
& w/o TabPFN      & {89.09} \xmark   & \textbf{2139.63}         & 2981.68 \\
\bottomrule
\end{tabular}
\end{table}

On the EWZ dataset, which models individual building consumption, removing graph-based features slightly reduces coverage to 89.96\% and increases the Winkler score to 0.6563. Removing temporal features has a larger effect, leading to a further increase in the Winkler score to 0.6727. The larger performance degradation observed when temporal features are removed  indicates that temporal dynamics constitute the primary source of predictive information at the building level, while graph-based features provide complementary improvements in calibration and interval efficiency. Replacing TabPFN with a QRF (trained on the same features) causes a more pronounced degradation: PI width increases to 0.4503 and the Winkler score rises sharply to  0.7186. Notably, this degradation exceeds that observed for either feature ablation, demonstrating that the choice of calibration model is at least as important as the feature representation itself. This shows that the pretrained inductive bias of the foundation model is critical for translating rich spatial-temporal features into well-calibrated uncertainty estimates.

On the larger-scale CKW dataset, the effect of feature removal is even more apparent. Although coverage remains close to the nominal 90\% level in the feature‑ablated variants, both ablations lead to wider PIs and higher Winkler scores. In particular, removing graph-based features yields the worst performance among the feature‑ablated variants, with a Winkler score of 3002.30, compared with 2961.14 when temporal features are removed. This reversal relative to EWZ highlights the increasing importance of spatial information at larger aggregation scales, where dependencies between municipalities and regions become more pronounced. The w/o TabPFN variant suffers an even more severe decline: coverage falls well below the target to 89.09\%, and despite a slightly reduced average width (2139.63), the Winkler score increases to 2981.68, indicating that the QRF is unable to maintain calibration. The relative underperformance of the QRF variant suggests that while STOIC’s spatial-temporal features are highly informative, they introduce a high-dimensional and possibly highly correlated input space that challenges traditional random forests. Conversely, TabPFN’s pretrained transformer architecture is substantially better suited to modeling these complex feature interactions, yielding superior calibration and interval efficiency. Across both datasets, the TabPFN ablation consistently produces the largest performance degradation, providing strong evidence that foundation-model-based calibration is a key component of STOIC rather than a simple implementation choice. These results reinforce that spatial correlations are especially important in aggregated regional networks, and that the in‑context learning capability of TabPFN is critical for achieving both reliable coverage and sharp intervals. Overall, the ablation study demonstrates that STOIC's performance arises from the combination of graph-aware features, temporal context, and foundation-model-based calibration, with each component making a measurable contribution and TabPFN providing the largest individual benefit.

\subsubsection{Sensitivity to Calibration Set Size}
\label{sec:calibration_sensitivity}

We examine how the size of the calibration set affects STOIC’s performance on the SCS dataset. We subsample this set to create calibration subsets ranging from 10\% to the full size, while keeping the test set unchanged.  For each subset, TabPFN is recalibrated and the resulting prediction intervals are evaluated using coverage, PI‑Width, and the Winkler score at the 90\% target level. Figure~\ref{fig:cal_sens} shows the three metrics as functions of the calibration fraction.  Even when only 10\% of the available calibration data is used, STOIC achieves a coverage of 88.92\%, remaining remarkably close to the nominal 90\% target. This result highlights the strength of the proposed calibration framework and demonstrates that reliable uncertainty estimates can be obtained from very limited calibration data.  As the calibration set grows, coverage stabilizes around 90\% and eventually slightly exceeds it, while the average interval width steadily decreases and the Winkler score improves.  Importantly, the gains in sharpness are achieved without compromising calibration, indicating that additional calibration samples are primarily used to reduce uncertainty rather than to correct systematic coverage errors. This trend confirms that additional calibration samples allow the foundation model to estimate conditional quantiles more precisely, yielding sharper intervals without sacrificing coverage. 

The observed behavior is particularly attractive from a practical deployment perspective. Many real-world forecasting systems initially operate with only a small amount of calibration data, especially when new assets, regions, or forecasting tasks are introduced. The results demonstrate that STOIC remains reliable even in such low-data regimes and improves gracefully as more calibration data become available. Second, increasing the calibration set size provides a predictable improvement in sharpness, making STOIC well‑suited for applications where historical data accumulates over time. Overall, these results demonstrate that STOIC combines strong calibration efficiency with robust uncertainty quantification, maintaining near-nominal coverage even under severe calibration data constraints while consistently benefiting from additional data. This ability to deliver reliable prediction intervals across both low- and high-data regimes makes STOIC particularly well-suited for real-world deployment scenarios.

\begin{figure}[htbp]
\centering
\includegraphics[width=\linewidth]{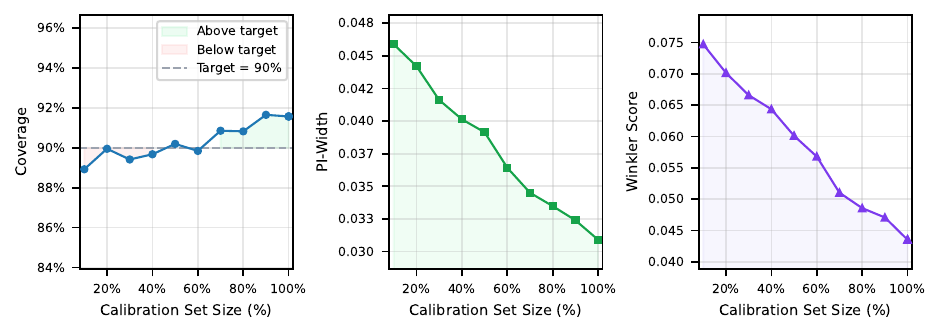}
\caption{Sensitivity of STOIC to calibration set size on the SCS dataset.}
\label{fig:cal_sens}
\end{figure}

\section{Conclusion}
\label{sec:conclusion}
In this paper, we proposed Spatial-Temporal Graph Conformal Prediction with In-Context Learning (STOIC), a novel framework for uncertainty quantification in energy demand forecasting. By decoupling point forecasting from interval estimation, STOIC addresses the inherent limitations of traditional conformal prediction when applied to spatially and temporally correlated network data. Our approach leverages the strengths of spatial-temporal graph neural networks to capture complex system dependencies while utilizing the zero-shot capabilities of tabular foundation models to perform adaptive and in-context calibration of residuals. Experimental results across five diverse benchmarks spanning synthetic simulations and real-world electricity and district heating datasets demonstrate that STOIC consistently delivers more reliable and data-efficient prediction intervals than existing baselines. Across all datasets, STOIC either matches or exceeds the target coverage level while maintaining competitive or substantially narrower prediction intervals, resulting in consistently strong Winkler scores. Specifically, the integration of in-context learning allows the framework to achieve well-calibrated coverage. The ablation studies further demonstrate that both spatial and temporal features contribute meaningfully to performance, while the TabPFN-based calibration module provides the largest individual gain, highlighting the importance of foundation-model-based calibration. Moreover, STOIC remains effective even when only a small calibration dataset is available, demonstrating strong data efficiency and making the framework particularly attractive for practical deployment scenarios where labeled calibration data are limited. This makes STOIC particularly suitable for modern sustainable energy systems characterized by high variability and evolving consumption patterns.

The comparison with TimesFM further highlights the importance of explicit calibration and domain-aware modeling. Despite being a large-scale pretrained forecasting model, TimesFM consistently fails to achieve the desired coverage levels on real-world energy datasets, whereas STOIC maintains reliable coverage while preserving competitive interval widths. This result demonstrates that large-scale pretraining alone is insufficient for reliable uncertainty quantification and that principled calibration remains essential for obtaining trustworthy prediction intervals in complex energy systems.

Overall, the results establish STOIC as an effective framework for uncertainty quantification in spatial-temporal forecasting problems, demonstrating that the combination of graph-based representations, conformal prediction, and foundation-model-based in-context learning provides a powerful and scalable approach for generating calibrated prediction intervals under spatial dependencies, temporal non-stationarity, and limited calibration data.

Future work can explore integrating a broader range of exogenous variables into the in-context prompt, such as real-time weather alerts, to further improve interval sharpness. Additionally, the STOIC framework can be extended to multi-horizon forecasting to enable robust uncertainty estimates for long-term strategic energy planning and grid stability assessments.

\bibliographystyle{tmlr}
\bibliography{main}

\end{document}